\renewcommand\bibsection%
\newtheorem{observation}{Observation}
\newcommand{\boxx}{\square}
\newcommand{\diaa}{\lozenge}
\newcommand{\Atoms}{{\bf P}}
\newcommand{\Lan}{\mathcal{L}}
\newcommand{\lequiv}{\leftrightarrow}
\newcommand{\limp}{\rightarrow}
\newcommand{\true}[1]{ \left\llbracket #1  \right\rrbracket  }
\newcommand{\IFF}{\Longleftrightarrow}
\newcommand{\betterthaneq}{\succcurlyeq}
\newcommand{\worsethaneq}{\preccurlyeq}
\newcommand{\Model}{\mathcal{M}}
\newcommand{\Val}{\mathcal{V}}
\renewcommand{\phi}{\varphi}
\newcommand{\putaway}[1]{}
\newcommand{\oblig}{{\bf O}}
\newcommand{\perm}{{\bf P}}
\newcommand{\Obdot}{\hat{\mathbf O}}
\newcommand{\set}[1]{\{ #1 \}} 
\newcommand{\Everywhere}{\boxx}
\newcommand{\Somewhere}{\diaa}
\newcommand{\Ob}[2]{{\mathbf O}(#1|#2)}
\begin{document}

\title{A Note on Nesting in Dyadic Deontic Logic}
\titlerunning{A Note on Nesting in Dyadic Deontic Logic}

\author{Agneau Belanyek\inst{1} \and Davide Grossi\inst{2} \and Wiebe van der Hoek\inst{3} \\
University of Liverpool}
\authorrunning{Belanyek, Grossi and van der Hoek}

\institute{\email{a.belanyek@liverpool.ac.uk}
			\and  \email{d.grossi@liverpool.ac.uk}
			\and \email{wiebe@csc.liv.ac.uk}	}

\maketitle

\begin{abstract}
The paper reports on some results concerning \AA qvist's dyadic logic known as {\em system G}, which is one of the most influential logics for reasoning with dyadic obligations (``it ought to be the case that \ldots if it is the case that \ldots''). Although this logic has been known in the literature for a while, many of its properties still await in-depth consideration. In this short paper we show: that any formula in system G including nested modal operators is equivalent to some formula with no nesting; that the universal modality introduced by \AA qvist in the first presentation of the system is definable in terms of the deontic modality.


\end{abstract}

\section{Introduction}

Modern research into the use of deontic logic to represent and reason with normative statements began with the introduction in \cite{vonWright51deontic} of what has become known as standard deontic logic (SDL). SDL is based on a propositional language and uses modal operators for obligation and permission where, intuitively, $\oblig\phi$ means that it ought to be the case (or, it is obligatory) that the formula $\phi$ be true, and as the dual, $\perm\phi$ means it is permitted that the formula $\phi$ be true. 

Dyadic deontic logic (DDL) was introduced in \cite{VonWright1964} to cater for paradoxes in SDL resulting from so-called contrary-to-duty obligations as described in \cite{Chisholm1963}. In DDL, dyadic modal operators (adapted from the conventions used to express conditional probabilities) enable modal formulae to express the conditions in which obligations and permissions apply so that, for example, $\oblig(\phi|\psi)$ means that it is obligatory that the formula $\phi$ be true, provided or given that $\psi$ is true.

The standard semantics used for DDL were introduced by \citeauthor{Hansson1969} in \cite{Hansson1969}, who imposes restrictions on the language such that modal operators cannot be nested and mixed formulae (connecting modal and propositional formulae) are not permitted. 

In \cite{Aqvist1986}, \citeauthor{Aqvist1986} proposes system G, a system of dyadic deontic logic which extends \citeauthor{Hansson1969}'s logic with a
universal necessity operator $\Everywhere$. Intuitively $\Everywhere \phi$  means that the formula $\phi$ is necessarily true (and the dual $\Somewhere \phi$ means the formula $\phi$ is possibly true).\footnote{\cite{Aqvist1986} uses the operators $N$(necessarily) and $M$(maybe), but we have opted for the standard `box' and `diamond' normally used with alethic logics and consistent with other recent authors such as Parent (for example, in \cite{DBLP:conf/deon/Parent08}).}



We present an interesting property of System G, namely that every formula with nested $\oblig$ and $\Everywhere$ operators is equivalent to another formula without any nesting. It has already been established that this is the case for $\mathsf{S5}$ logics in \cite{Meyer1995}. In System G, the $\Everywhere$ operator is an $\mathsf{S5}$ operator and it therefore follows from \cite{Meyer1995} that nestings of the $\Everywhere$ are superfluous. The $\oblig$ operator however presents additional challenges as the nesting may occur within either argument of the operator. We can still prove that despite this, nestings of the $\oblig$ operator and of combinations of the $\oblig$ and $\Everywhere$ operator are superfluous. These results provide an {\em a posteriori} justification to Hansson's original syntactic restrictions.

\medskip

The rest of the paper is structured as follows. In Section~\ref{sec:preliminaries} we introduce some preliminaries including the syntax, semantics and proof theory of System G. In Section~\ref{sec:nestedformulae} we show our main result that every System G formula with nested $\oblig$ and $\Everywhere$ operators is equivalent to another formula without any nesting. We do this using both a semantic and a syntactic argument. As part of the proof, we also show that in System G, all  $\Everywhere$-formulae can be expressed using  the $\oblig$ modality only. We conclude in In Section~\ref{sec:discussion}. Longer proofs of the lemmas and theorems are provided in the appendix.


\section{Preliminaries} \label{sec:preliminaries}

\subsection{Some background on nesting deontic operators}

The issue of whether deontic logic formalisms should allow operators to be nested within the scope of others has been considered as early as in \cite{Marcus1966}. 
Sentences in which deontic operators are nested are sentences like (taken from \cite{Marcus1966}):
\begin{itemize}
	\item It is obligatory that it is obligatory that everyone keeps his promises.\label{ex:OOp}
	\item It ought to be the case that what ought to be the case is the case. \label{ex:OOaTHENa}
\end{itemize}
The issue, from a deontic logic point of view, is whether statements such as the above ones should be objects in a logic or should rather be ignored (on different bases such as being non-meaningful, or trivial or uninteresting). 

As pointed out in \cite{Goble1966} (quoted in \cite{Wansing1998}), little attention was initially paid to nested modalities. In \cite{VonWright1964}, \citeauthor{VonWright1964} defines the language of both his original and dyadic systems to exclude nesting with no explicit discussion of why this should be the case. \citeauthor{Hansson1969} in \cite{Hansson1969} follows this tradition and also explicitly forbids nesting in his language, commenting however that despite this restriction ``almost all philosophical problems discussed in connection with deontic logic are expressible''. In \cite{Aqvist1986}, \citeauthor{Aqvist1986} identifies nesting as one of the differences between \citeauthor{Hansson1969}'s language and his proposed System G but does not discuss any implications of this difference. 

Some work has been done on the representation and meaning of nested modalities in some normative systems, such as in so-called \emph{stit} logics in \cite{Belnap1995}. However, despite the fact that it is considered a powerful logic for handling contrary-to-duty obligations, to the best of our knowledge, no research has been done on nested modalities in System G. In this paper we aim to address this and provide a technical answer to the issue of nesting (within System G).

Results of this type are well-known for normal monadic modal logics (e.g., for $\mathsf{S5}$ \cite{Meyer1995}). In System G, the $\Everywhere$ operator is an $\mathsf{S5}$ operator and it therefore follows from \cite{Meyer1995} that nestings of the $\Everywhere$ are superfluous. The $\oblig$ operator however presents additional challenges as the nesting may occur within either argument of the operator. We prove that despite this, nestings of the $\oblig$ operator and of combinations of the $\oblig$ and $\Everywhere$ operator are superfluous.

\subsection{Language, Syntax and Semantics of System G}\label{sec:systemG}

\subsubsection{Language}

Formally the language of system G is defined thus

\begin{definition}[Language of System G]\label{def:systemG}
Let $\Atoms$ be a set of propositional atoms. The language of system G, $\Lan$ is defined by the following inductive syntax, where $p \in \Atoms$ and $\phi \in \Lan$
\[
\Lan :  p \ |\ \neg \phi \ |\  \phi \lor \phi \ |\  \oblig(\phi|\phi) \ |\  \Everywhere \phi
\]

\end{definition}

Conjunction, material implication and biconditional are defined from negation and disjunction in the expected way. We treat $\top$ as a propositional atom in $\Atoms$, and define $\bot$ as $\neg \top$. Parentheses may be used to clarify the order of operators. 

$\oblig(\phi_1|\phi_2)$ is read as `$\phi_1$ is obligatory, given $\phi_2$'. $\Everywhere \phi$ is read as `Everywhere $\phi$'. It follows that the duals of these operators $\perm$(`Permitted') and $\Somewhere$(`Somewhere') are defined as follows:
\[
\perm(\phi_1|\phi_2) =_{\mathit{def}} \neg \oblig(\neg \phi_1|\phi_2)
\]
\[
\Somewhere \phi =_{\mathit{def}} \neg \Everywhere \neg \phi
\]

\subsubsection{Semantics}

These formulae are evaluated using Kripke-style possible world semantics based on the so-called class of $Strong\ H_3$ models defined by \citeauthor{Aqvist1986} in \cite{Aqvist1986} as follows:

\begin{definition}[$Strong\ H_3$ model]\label{def:strongh3model}
Given a set of propositional atoms $\Atoms$, a $Strong\ H_3$ model $\Model$ is a structure $\langle S, \worsethaneq, \Val \rangle$, where 

\begin{itemize}
	\item $S$ is a non-empty set of possible worlds
	
	\item $\worsethaneq$ is a reflexive, transitive, fully connected and limited (see below) binary relation over $S$ that satisfies the \emph{limit assumption} condition described below.
	
	\item $\Val$ is a valuation function  $\Val: \Atoms \to \wp(S)$ that assigns a truth set to every proposition in $\Atoms$.
\end{itemize} 
\end{definition}

The weak preference relation $\worsethaneq$ captures the idea that for any two worlds $t$ and $s$, if $t \worsethaneq s$ then $s$ is at least as good as $t$. 
The relation is \emph{fully connected} if and only if any two worlds in $S$ are comparable, or formally that:
\begin{eqnarray}
\forall s,t \in S, s \worsethaneq t \mbox{ or } t \worsethaneq u
\end{eqnarray}

The limitedness property can be characterised in terms of the $opt$ function, defined as follows.
\begin{definition}[$opt$ function]
\begin{eqnarray}
opt(X) & = &  \set{x \in X \mid \forall y \in X: y \worsethaneq x}\label{eq:optdef}
\end {eqnarray}
\end{definition}

Intuitively, $opt(X)$ are the elements of $X$ that are at least as good as any other element in $X$. We say that $opt(X)$ is the \emph{optimal set} or the set of \emph{optimal elements} of X.

The preference relation $\betterthaneq$ is then \emph{limited} if and only if for every non-empty subset of $S$ there is at least one optimal element, or formally:
\begin{eqnarray}
\forall X \subseteq S, \mbox{if } X \neq \emptyset \mbox{ then } opt(X) \neq \emptyset \label{eq:limass}
\end{eqnarray}

\begin{definition}[Truth at a point]
The notion of a formula $\phi \in \Lan$  being true at a world s in a model $\Model = \langle S, \worsethaneq, \Val \rangle$, denoted $\Model, s \models \phi$, is defined inductively as follows:
\begin{align}
  \Model,s \models  \top & \\
	\Model,s \models p &\IFF s \in \Val(p) \\
	\Model,s \models \neg \phi &\IFF \mbox{not }\Model,s \models \phi \\
	\Model,s \models \phi_1 \lor \phi_2 &\IFF  \Model,s \models \phi_1 \mbox{ or } \Model,s \models \phi_2 \\
  \Model,s \models \oblig(\phi_1 \mid \phi_2)  &\IFF   opt(\true{\phi_2}) \subseteq \true{\phi_1} \label{eq:ObligDef} \\ 
	\Model,s \models \Everywhere \phi &\IFF  \forall (t \in S) \Model,t \models \phi\label{eq:EverywhereDef}
\end{align}
\end{definition}

($\true{\phi}$ denotes the \emph{truth set} of $\phi$, i.e. the set of worlds in which $\phi$ is true.)\\

The truth definitions for $\perm$ (the dual of $\oblig$) and $\Somewhere$(the dual of $\Everywhere$), derived from \eqref{eq:ObligDef} and \eqref{eq:EverywhereDef} are as follows:
\begin{align}
 \Model,s \models \perm(\phi_1 \mid \phi_2)  &\IFF   opt(\true{\phi_2}) \cap \true{\phi_1} \neq \emptyset \label{eq:PermDef}\\
 \Model,s \models \Somewhere \phi  &\IFF  \exists (t \in S) \Model,t  \models \phi\label{eq:SomewhereDef}
\end{align} 

We say that a formula $\phi$ is \emph{valid} in $Strong\ H_3$ models if it is true in all worlds of all $Strong\ H_3$ models, and denote this $\models_{H_3} \phi$. (We sometimes drop the subscript if the class of models being referred to is clear.) If $\phi$ is valid with respect to a class of models, we say that $\phi$ is a \emph{validity} of that class of models. We say that two formulae $\phi$ and $\psi$ are semantically equivalent if $\phi \lequiv \psi$ is a validity.

\begin{observation}\label{obs:globalmodalprops}
The modalities $\oblig$ and $\Everywhere$, as well as their duals, are global modalities in the following sense:

\begin{align*}
&\exists s \in S : \Model,s \models \oblig(\phi \mid \psi)  \mbox{ iff }  \forall t \in S, \Model,t \models \oblig(\phi \mid \psi) \\
&\exists s \in S : \Model,s \models \Everywhere \phi  \mbox{ iff }  \forall t \in S, \Model,t \models \Everywhere \phi
\end{align*}
\end{observation}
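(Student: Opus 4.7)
The plan is to observe that both biconditionals follow immediately by inspection of the semantic clauses \eqref{eq:ObligDef} and \eqref{eq:EverywhereDef}. In each case, the right-hand side of the clause that defines truth at $s$ makes no reference to the evaluation point $s$: it is a condition on the model $\Model$ alone. Hence the satisfaction of $\oblig(\phi|\psi)$ or $\Everywhere\phi$ at some world in $\Model$ is equivalent to its satisfaction at every world in $\Model$, and the biconditionals are trivial.

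Concretely, for the first claim I would write out the definition: $\Model, s \models \oblig(\phi | \psi)$ iff $opt(\true{\psi}) \subseteq \true{\phi}$. The set $opt(\true{\psi})$ is determined by $\worsethaneq$ and $\Val$, and $\true{\phi}$ is determined by $\Val$; neither depends on $s$. Therefore the statement $opt(\true{\psi}) \subseteq \true{\phi}$ is a property of $\Model$ as a whole. From this it follows that $\Model,s \models \oblig(\phi|\psi)$ for some $s \in S$ iff $opt(\true{\psi}) \subseteq \true{\phi}$ iff $\Model,t \models \oblig(\phi|\psi)$ for every $t \in S$, which is exactly the desired biconditional.

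For the second claim, the argument is even more direct: by \eqref{eq:EverywhereDef}, $\Model, s \models \Everywhere\phi$ iff $\forall t \in S\ \Model,t \models \phi$, and the right-hand side again does not mention $s$. So the truth of $\Everywhere\phi$ at some world already implies its truth at every world (and vice versa).

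There is essentially no obstacle here; the observation is a bookkeeping remark about the shape of the semantic clauses. The only thing worth flagging explicitly is that it relies on $S$ being the whole domain of $\Model$ (so that $\true{\phi}$ and $\true{\psi}$ are absolute to $\Model$), which is built into Definition~\ref{def:strongh3model}. The real payoff of the observation is downstream, where it will justify treating $\oblig$- and $\Everywhere$-formulae as model-constants when reducing nested modalities in Section~\ref{sec:nestedformulae}.
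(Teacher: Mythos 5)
Your argument is correct and is exactly the reasoning the paper relies on: the Observation is stated without proof precisely because the truth clauses for $\oblig(\phi\mid\psi)$ and $\Everywhere\phi$ impose conditions on the model alone, independent of the evaluation point. The only detail worth making explicit for the direction from ``for all $t$'' to ``for some $s$'' is that $S$ is non-empty, which Definition~\ref{def:strongh3model} guarantees.
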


Note that, as a consequence, for a formula of the form $\oblig(\phi|\psi)$ in any model $\Model =  \langle S, \worsethaneq, \Val \rangle$, we either have $\true{\oblig(\phi|\psi)}=S$ or $\true{\oblig(\phi|\psi)}= \emptyset$. Likewise for the other modalities.

\subsubsection{Axiomatics}

\AA qvist proposes the following axioms and rules for system G (names are from \cite{DBLP:conf/deon/Parent08}).
\begin{align}
	&\mbox{All propositional tautologies} \tag{\tt PL}\\
	&\mbox {$\mathsf{S5}$ axioms for }\Everywhere \tag{\tt S5}\\
	&\perm(\phi|\psi) \lequiv \neg \oblig(\neg \phi|\psi) \tag{\tt DfP}\\
	&\oblig(\phi_1 \limp \phi_2|\psi) \limp (\oblig(\phi_1|\psi) \limp \oblig(\phi_2|\psi)) \tag{\tt COK}\\
	&\oblig(\phi|\psi) \limp \Everywhere \oblig(\phi|\psi) \tag{\tt Abs}\\
	&\Everywhere\phi \limp \oblig(\phi|\psi) \tag{\tt CON}\\
	&\Everywhere(\psi_1 \lequiv \psi_2) \limp ( \oblig(\phi|\psi_1) \lequiv \oblig(\phi|\psi_2) ) \tag{\tt Ext}\\
	&\oblig(\phi|\phi) \tag{\tt Id}\\
	&\oblig(\phi|\psi \land \chi) \limp \oblig(\chi \limp \phi|\psi) \tag{\tt C}\\
	&\Somewhere(\psi) \limp (\oblig(\phi|\psi) \limp \perm(\phi|\psi)) \tag{\tt D*}\\
	&\perm(\phi|\psi) \land  \oblig(\phi \limp \chi | \psi)  \limp \oblig(\chi | \phi \land \psi)  \tag{\tt S}\\ \nonumber \\
	&\mbox{If }\vdash \phi \mbox{ and } \vdash \phi \limp \psi \mbox{ then } \vdash \psi \tag{\tt MP}\\
	&\mbox{If } \vdash \phi \mbox{ then } \vdash \Everywhere \phi \tag{\tt N} 	
\end{align}

We refer to this axiom system as system G, and so we say that a formula $\phi$ is \emph{derivable in system G} if it can be derived using this calculus. We denote this by $\vdash_G \phi$. (For readability, we sometimes drop the subscript if the system referred to is clear).  If $\phi$ is derivable in system G, we say that $\phi$ is a \emph{theorem} of system G. We say that two formulae $\phi$ and $\psi$ are \emph{provably equivalent} if $\phi \lequiv \psi$ is a theorem of System G.

\citeauthor{DBLP:conf/deon/Parent08} \cite{DBLP:conf/deon/Parent08} shows that this axiom system is sound and strongly complete with respect to $Strong\ H_3$ models.

\section{Nested formulae}\label{sec:nestedformulae}

In this section we present our main finding that any system G formula which contains nested modal operators is equivalent to some other formula without nesting. We present a semantic and a syntactic argument for this theorem.

Both arguments use the following definitions and lemma.

\begin{definition} [Modal depth]
The modal depth $md(\phi)$ of a formula $\phi \in \Lan$ is defined inductively as 
\begin{align*}
   md(\top) &= 0 \\
	 md(p)    &= 0, \forall p \in \Atoms \\
	 md(\neg \phi) &= md(\phi)\\
	 md(\phi_1 \lor \phi_2) &= \max(md(\phi_1),md(\phi_2))\\
	 md(O(\phi_1|\phi_2)) &= 1 + \max(md(\phi_1),md(\phi_2))\\
	 md(\Everywhere\phi) &= 1 + md(\phi)
\end{align*}

where $\max$ represents the arithmetic maximum, i.e. for any $x,y \in \mathbb{N}$, if $x \geq y$ then $\max(x,y) = x$, otherwise $\max(x,y) = y$.

\end{definition}

\begin{definition}[Unnested disjunctive normal form (UDNF)]\label{def:udnf}
Let $\Lan^\oblig$ be the sublanguage of $\Lan$ without formulae containing the $\Everywhere$-operator.
We say that a formula $\psi \in \Lan^\oblig$ is in {\em Unnested Disjunctive Normal Form (UDNF)} if it is a disjunction of conjunctions of the form
\[
\delta = \alpha \land \oblig(\varphi_1 \mid \psi_1) \land \dots \land  \oblig(\varphi_n \mid \psi_n) \land \neg\oblig(\varphi_{n+1} \mid \psi_{n+1}) \land \dots \land \neg\oblig(\varphi_{n+k} \mid \psi_{n+k})
\]
where $n, k \in \mathbb{N}$ and all of the formulae $\alpha, \varphi_m, \psi_m$ ($m \leq n + k$) are propositional formulae ($\top$ and $\bot$ are considered propositional formulae). The formula $\delta$ is called a {\em canonical conjunction} and the formulae $\oblig(\varphi_m \mid \psi_m)$ are called {\em prenex} formulae.
\end{definition}

Observe that by the above definition, formulae in UDNF have a maximum modal depth of 1.

The following lemma from \cite{Meyer1995} guarantees that a prenex formula within a formula in UDNF can always be moved to the outermost level.

\begin{lemma}\label{lem:prenex}(\citeauthor{Meyer1995} \cite{Meyer1995} Lemma 1.7.6.2) If $\psi$ is in UDNF and contains a prenex formula $\sigma$, then $\psi$ is equivalent to a formula of the form $\pi \lor ( \lambda \land \sigma)$ where $\pi$, $\lambda$ and $\sigma$ are all in UDNF.
\end{lemma}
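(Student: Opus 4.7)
The plan is to prove this by a purely propositional rearrangement, with no appeal to the global-modality behaviour of prenex formulae. A UDNF is essentially a DNF over the extended alphabet consisting of the propositional atoms together with the prenex formulae, so the statement is just an instance of the familiar fact that one can always pull out a literal occurring in a DNF formula.

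Concretely, I would write $\psi = \bigvee_{i=1}^{m} \delta_i$ with each $\delta_i$ a canonical conjunction, and partition the indices into $I^{+} = \{\, i \mid \sigma \text{ occurs as a conjunct of } \delta_i \,\}$ and $I^{-} = \{1,\dots,m\} \setminus I^{+}$. For $i \in I^{+}$ I write $\delta_i = \hat{\delta}_i \land \sigma$, with $\hat{\delta}_i$ the canonical conjunction obtained by deleting the $\sigma$-conjunct (still a canonical conjunction, possibly degenerate with propositional part $\top$ and no remaining prenex conjuncts). I then set $\lambda := \bigvee_{i \in I^{+}} \hat{\delta}_i$ and $\pi := \bigvee_{i \in I^{-}} \delta_i$. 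Both $\pi$ and $\lambda$ are disjunctions of canonical conjunctions, hence in UDNF, and $\sigma$ itself counts as a degenerate canonical conjunction (with $\alpha = \top$, $n = 1$, $k = 0$), hence also in UDNF. Distributing $\land$ over $\lor$ yields $\lambda \land \sigma = \bigvee_{i \in I^{+}} (\hat{\delta}_i \land \sigma) = \bigvee_{i \in I^{+}} \delta_i$, whence $\pi \lor (\lambda \land \sigma) = \bigvee_{i \in I^{-}} \delta_i \lor \bigvee_{i \in I^{+}} \delta_i = \psi$, as required.

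The only things to watch are edge cases: an empty disjunction is to be read as $\bot = \neg \top$, which I count as a degenerate UDNF formula; and if some $\delta_i$ happens to list $\sigma$ more than once, or both $\sigma$ and $\neg\sigma$, it is tidied up by idempotence of $\land$ or discarded as trivially $\bot$. I do not foresee a genuine obstacle: the lemma is purely syntactic and does not rely on the globality of prenex modalities recorded in Observation~\ref{obs:globalmodalprops} — that property is what earns its keep later, when this lemma is deployed to strip $\oblig$- and $\Everywhere$-operators out of deeper nestings.
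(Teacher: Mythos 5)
Your proof is correct and follows essentially the same purely syntactic regrouping as the paper's: the paper simply factors $\sigma$ out of a single canonical conjunction containing it and collects the remaining disjuncts into $\pi$, whereas you factor $\sigma$ out of every disjunct in which it occurs simultaneously; both give a valid decomposition and neither needs any modal reasoning. Your handling of the degenerate cases ($\lambda = \top$, empty disjunctions read as $\bot$) matches the paper's parenthetical remark.
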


\begin{proof}
$\psi$ is in UDNF so $\psi = \delta_1 \lor \delta_2 \lor \dots \lor \delta_m$ where all the $\delta_i$'s are canonical conjunctions. Suppose $\sigma$ occurs in $\delta_m$. As $\sigma$ is one of the conjuncts of $\delta_m$, $\delta_m$ can be written as $\lambda \land \sigma$, where $\lambda$ collects the remaining conjuncts in $\delta_m$ (or $\top$ if $\delta_m = \sigma$).  Taking $\pi$ to be $(\delta_1 \lor \delta_2 \lor \dots \lor \delta_{m-1})$ gives the desired result $\psi = \pi \lor (\lambda \land \sigma)$.
\end{proof}
  
\subsection{Semantic argument}

First we state the following proposition which provides a means to obtain for every $\Everywhere$-formula, a semantically equivalent $\oblig$-formula.

\begin{proposition}\label{prop:Def}


\begin{align}
\models \Everywhere \phi & \lequiv \oblig(\bot|\neg \phi)
\end{align}

\end{proposition}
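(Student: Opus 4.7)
The plan is to unfold both sides to their semantic conditions at an arbitrary world and then match them via the limit assumption. Fix a $Strong\ H_3$ model $\Model = \langle S, \worsethaneq, \Val \rangle$ and a world $s \in S$. By the truth clause~\eqref{eq:EverywhereDef} the left-hand side $\Everywhere \phi$ holds at $s$ iff $\true{\phi} = S$, equivalently $\true{\neg \phi} = \emptyset$. By clause~\eqref{eq:ObligDef}, the right-hand side $\oblig(\bot \mid \neg \phi)$ holds at $s$ iff $opt(\true{\neg \phi}) \subseteq \true{\bot} = \emptyset$, i.e., iff $opt(\true{\neg \phi}) = \emptyset$. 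So the proposition reduces to the set-theoretic claim
\[
\true{\neg \phi} = \emptyset \IFF opt(\true{\neg \phi}) = \emptyset.
\]

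First I would prove the left-to-right direction, which is immediate: by the definition of $opt$ in~\eqref{eq:optdef}, $opt(X) \subseteq X$, so if $\true{\neg \phi} = \emptyset$ then trivially $opt(\true{\neg \phi}) = \emptyset$. This direction does not use any property of $\worsethaneq$.

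For the right-to-left direction I would invoke the limit assumption~\eqref{eq:limass}, which is precisely the contrapositive of what is needed: if $\true{\neg \phi} \neq \emptyset$, then $opt(\true{\neg \phi}) \neq \emptyset$. Hence $opt(\true{\neg \phi}) = \emptyset$ forces $\true{\neg \phi} = \emptyset$. Combining both directions, $\oblig(\bot \mid \neg \phi)$ holds at $s$ iff $\Everywhere \phi$ holds at $s$. Since $s$ and $\Model$ were arbitrary, the biconditional is valid in the class of $Strong\ H_3$ models.

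There is no real obstacle here; the proof is essentially a one-line use of the limit assumption. The only subtlety worth flagging is that, because both modalities are global (Observation~\ref{obs:globalmodalprops}), the choice of evaluation world $s$ is immaterial, which is what lets the pointwise equivalence yield the validity claimed in the statement.
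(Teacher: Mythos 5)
Your proof is correct and follows essentially the same route as the paper's: both directions unfold the truth clauses for $\Everywhere$ and $\oblig$, with the left-to-right direction being immediate from $opt(\emptyset)=\emptyset$ and the right-to-left direction resting on the limit assumption~\eqref{eq:limass}. Your reduction to the single set-theoretic equivalence $\true{\neg\phi}=\emptyset \IFF opt(\true{\neg\phi})=\emptyset$ is just a tidier packaging of the same argument.
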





The following lemma guarantees that an $\oblig$ or $\neg \oblig$ formula within the scope of another can be brought out of that scope.
\begin{lemma} \label{lem:equiv} Let $\Obdot$ stand for an arbitrary but fixed $\Ob{\varphi'}{\psi'}$ formula.


\begin{align}
\label{item:equiv:one} &\models \oblig(\varphi | (\pi \lor (\lambda \land \Obdot))) \lequiv ((\Obdot \land \oblig(\varphi|\pi \lor \lambda)) \lor (\neg\Obdot \land \oblig(\varphi|\pi))\\
\label{item:equiv:two} &\models \oblig(\varphi | (\pi \lor (\lambda \land \neg \Obdot))) \lequiv ((\neg \Obdot \land \oblig(\varphi|\pi \lor \lambda)) \lor (\Obdot \land \oblig(\varphi|\pi))\\
\label{item:equiv:three} &\models \oblig(\pi \lor (\lambda \land \Obdot)|\psi) \lequiv ((\Obdot \land \oblig(\pi \lor \lambda|\psi)) \lor (\neg \Obdot \land \oblig(\pi|\psi)))\\
\label{item:equiv:four} &\models \oblig(\pi \lor (\lambda \land \neg \Obdot)|\psi) \lequiv ((\neg \Obdot \land \oblig(\pi \lor \lambda|\psi)) \lor ( \Obdot \land \oblig(\pi|\psi)))
\end{align}

\end{lemma}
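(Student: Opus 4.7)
The plan is to exploit the globality of the modalities (Observation~\ref{obs:globalmodalprops}): since $\Obdot$ is a formula of the form $\Ob{\varphi'}{\psi'}$, in any $\mathit{Strong}\ H_3$ model $\Model = \langle S, \worsethaneq, \Val\rangle$ we have either $\true{\Obdot} = S$ or $\true{\Obdot} = \emptyset$ (and symmetrically for $\neg\Obdot$). This reduces each of the four equivalences to a straightforward case analysis on the global truth value of $\Obdot$.

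For item~\eqref{item:equiv:one} I would fix $\Model$ and distinguish two cases. If $\true{\Obdot} = S$, then $\true{\lambda \land \Obdot} = \true{\lambda}$, so $\true{\pi \lor (\lambda \land \Obdot)} = \true{\pi \lor \lambda}$, and the LHS reduces to $\oblig(\varphi \mid \pi \lor \lambda)$ at every world; meanwhile on the RHS the first disjunct simplifies to $\oblig(\varphi \mid \pi \lor \lambda)$ while the second disjunct is globally false, so the RHS also reduces to $\oblig(\varphi \mid \pi \lor \lambda)$. If instead $\true{\Obdot} = \emptyset$, then $\true{\lambda \land \Obdot} = \emptyset$, so $\true{\pi \lor (\lambda \land \Obdot)} = \true{\pi}$, the LHS becomes $\oblig(\varphi \mid \pi)$, the first disjunct on the RHS is globally false, and the second simplifies to $\oblig(\varphi \mid \pi)$. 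In each case both sides have the same (global) truth value, so the biconditional holds everywhere in $\Model$.

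Item~\eqref{item:equiv:two} is handled the same way after swapping the roles of $\Obdot$ and $\neg\Obdot$ in the case split. For items~\eqref{item:equiv:three} and~\eqref{item:equiv:four}, where the nested $\Obdot$ sits inside the first argument of $\oblig$, the same case analysis works: when $\true{\Obdot} = S$ we have $\true{\pi \lor (\lambda \land \Obdot)} = \true{\pi \lor \lambda}$ and when $\true{\Obdot} = \emptyset$ we have $\true{\pi \lor (\lambda \land \Obdot)} = \true{\pi}$, and plugging these into the clause $opt(\true{\psi}) \subseteq \true{\pi \lor (\lambda \land \Obdot)}$ from equation~\eqref{eq:ObligDef} yields exactly the corresponding disjunct of the RHS in each case.

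The main conceptual step — and the only one that is really doing any work — is the appeal to Observation~\ref{obs:globalmodalprops} which collapses $\Obdot$ to a global constant ($\top$ or $\bot$) in any fixed model. Once that is in hand, the rest is just routine substitution into the semantic clause for $\oblig$, and the four items differ only in whether the nested formula is $\Obdot$ or $\neg\Obdot$ and in whether it occurs in the first or second argument. The potential pitfall to be careful about is keeping track that the case split is done \emph{per model}, not uniformly, so that the equivalence is established as a validity rather than as a schematic tautology in two independent variables.
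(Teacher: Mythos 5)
Your proposal is correct and follows essentially the same route as the paper's proof: both hinge on Observation~\ref{obs:globalmodalprops} to conclude that $\true{\Obdot}$ is either $S$ or $\emptyset$ in any fixed model, and then compute $\true{\pi \lor (\lambda \land \Obdot)}$ as $\true{\pi \lor \lambda}$ or $\true{\pi}$ accordingly before substituting into the semantic clause for $\oblig$. The only difference is organizational — you do one global case split and show both sides of each biconditional collapse to the same unnested formula, whereas the paper proves the two implications separately with the case split repeated inside each — and your closing remark that the split is per model (so the result is a validity, not a schematic tautology) is exactly the right point of care.
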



We can now state our main theorem as follows.

\begin{theorem}\label{thm:nonestings} 
For every formula $\chi \in \Lan$, there exists a formula $\chi'$ such that $\chi'$ is in UDNF and $\models \chi \lequiv \chi'$
\end{theorem}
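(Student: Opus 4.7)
The plan is to proceed by strong induction on the modal depth $md(\chi)$. A key observation that streamlines most cases is that UDNF is closed under propositional connectives once prenex and negated-prenex formulae are treated as atomic literals: converting a Boolean combination of UDNF formulae back into UDNF is just propositional DNF rearrangement. Consequently, the base case $md(\chi)=0$ (where $\chi$ is propositional and any DNF is already a UDNF with empty modal part) and the cases $\chi=\neg\phi$ and $\chi=\phi_1\lor\phi_2$ (apply the IH to the subformulae and re-DNF) are routine.

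The case $\chi = \Everywhere\phi$ is handled by first applying Proposition~\ref{prop:Def} to rewrite $\chi$ as $\oblig(\bot \mid \neg\phi)$, which reduces it to the hard case below with arguments of modal depth at most $md(\chi)-1$, to which the IH applies.

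The hard case is $\chi = \oblig(\phi_1 \mid \phi_2)$. By the IH, $\phi_1$ and $\phi_2$ have UDNF equivalents $\phi_1^\ast$ and $\phi_2^\ast$, and $\chi \lequiv \oblig(\phi_1^\ast \mid \phi_2^\ast)$ follows from the extensionality of $\oblig$ in both arguments (either directly from the semantic clause, or syntactically from axioms \texttt{Ext} and \texttt{COK} together with Observation~\ref{obs:globalmodalprops}). Now iterate the following pull-out procedure: while the second argument of the outermost $\oblig$ still contains a prenex (resp.\ negated-prenex) occurrence $\Obdot$, use Lemma~\ref{lem:prenex} to put that argument in the form $\pi \lor (\lambda \land \Obdot)$ (resp.\ $\pi \lor (\lambda \land \neg\Obdot)$) and invoke item~\eqref{item:equiv:one} (resp.~\eqref{item:equiv:two}) of Lemma~\ref{lem:equiv} to lift $\Obdot$ (resp.\ $\neg\Obdot$) to the outer level; symmetrically use~\eqref{item:equiv:three} and~\eqref{item:equiv:four} to pull prenexes out of the first argument. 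Each such step replaces the current formula by one of the shape $(\Obdot \land \oblig(\cdots\mid\cdots)) \lor (\neg\Obdot \land \oblig(\cdots\mid\cdots))$ whose inner $\oblig$-arguments are UDNF with strictly fewer prenex occurrences than before. Since the total number of prenex occurrences inside the arguments of the outermost $\oblig$ is a finite natural-number measure that strictly decreases at every step, the loop terminates with a Boolean combination of prenex, negated-prenex, and propositional formulae; a concluding propositional DNF pass then yields a UDNF equivalent of $\chi$.

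The main obstacle is the termination bookkeeping for the pull-out loop: we must check that extracting a prenex from one argument of the outer $\oblig$ does not duplicate prenexes into the other argument. Inspection of Lemma~\ref{lem:equiv} confirms this: items~\eqref{item:equiv:one}--\eqref{item:equiv:two} leave the first argument $\varphi$ of the $\oblig$'s on the right-hand side unchanged, while items~\eqref{item:equiv:three}--\eqref{item:equiv:four} leave the second argument $\psi$ unchanged. Hence the prenex count in the argument not currently being processed is preserved, and the measure genuinely decreases. The remaining moves are standard propositional DNF conversions that present no further difficulty.
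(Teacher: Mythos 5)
Your proposal is correct and follows essentially the same route as the paper's proof: reduce $\Everywhere$ to $\oblig$ via Proposition~\ref{prop:Def}, pull prenexes out of the arguments of $\oblig$ via Lemma~\ref{lem:prenex} and Lemma~\ref{lem:equiv}, and finish with propositional DNF rearrangement. Your choice of induction on modal depth (rather than the paper's structural induction) and your explicit termination measure and extensionality step are minor refinements that, if anything, make the argument tighter than the paper's version.
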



\subsection{Syntactic argument}

The syntactic argument has the same structure as the semantic argument with Proposition~\ref{prop:Def} and Lemma~\ref{lem:equiv} replaced by syntactic counterparts. 

The following is a syntactic counterpart to Proposition~\ref{prop:Def}.
\begin{proposition}\label{prop:SynDef}
\begin{align}
  \vdash \Everywhere  \phi & \lequiv \oblig(\bot|\neg \phi) \label{eqn:SynEverywheretoO}
\end{align}

\end{proposition}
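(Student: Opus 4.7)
The plan is to prove both directions of the biconditional separately, purely within the axiomatic calculus of system G.

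For the direction $\vdash \Everywhere \phi \limp \oblig(\bot | \neg \phi)$, I would start by noting that $\phi \lequiv (\neg \phi \limp \bot)$ is a propositional tautology, hence by {\tt PL}, {\tt N} and standard $\mathsf{S5}$ reasoning we have $\vdash \Everywhere \phi \limp \Everywhere(\neg \phi \limp \bot)$. Applying {\tt CON} instantiated with $\psi = \neg\phi$ yields $\vdash \Everywhere(\neg\phi \limp \bot) \limp \oblig(\neg\phi \limp \bot \mid \neg\phi)$. From {\tt Id} we have $\vdash \oblig(\neg\phi \mid \neg\phi)$, and combining this with the previous line via {\tt COK} (which lets us detach $\oblig(\bot \mid \neg\phi)$ from $\oblig(\neg\phi \limp \bot \mid \neg\phi)$ and $\oblig(\neg\phi \mid \neg\phi)$) delivers the required $\oblig(\bot \mid \neg\phi)$.

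For the converse direction $\vdash \oblig(\bot | \neg \phi) \limp \Everywhere \phi$, I would exploit {\tt D*}, which in our instantiation gives $\vdash \Somewhere(\neg\phi) \limp (\oblig(\bot \mid \neg\phi) \limp \perm(\bot \mid \neg\phi))$. Unfolding {\tt DfP}, $\perm(\bot \mid \neg\phi)$ is $\neg\oblig(\top \mid \neg\phi)$. But $\oblig(\top \mid \neg\phi)$ is a theorem: $\vdash \top$ by {\tt PL}, so by {\tt N} we have $\vdash \Everywhere \top$, and {\tt CON} then yields $\vdash \oblig(\top \mid \neg\phi)$. Hence $\perm(\bot \mid \neg\phi)$ is refutable, and from {\tt D*} we conclude $\vdash \Somewhere(\neg\phi) \limp \neg\oblig(\bot \mid \neg\phi)$. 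Taking the contrapositive gives $\vdash \oblig(\bot \mid \neg\phi) \limp \neg\Somewhere(\neg\phi)$, and by the dual definition together with $\mathsf{S5}$ reasoning (double negation inside the box), $\neg\Somewhere(\neg\phi)$ is provably equivalent to $\Everywhere\phi$, closing the argument.

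The main obstacle is the backward direction: one has to recognize that the limit-assumption-flavoured axiom {\tt D*} is precisely the right tool, because it encodes the fact that a non-empty antecedent cannot be paired with an impossible consequent. Once $\oblig(\top \mid \neg\phi)$ is identified as an immediate theorem via {\tt N} and {\tt CON}, and {\tt DfP} is used to rewrite $\perm$, the rest follows by propositional manipulation. The forward direction is more routine and essentially amounts to combining {\tt CON}, {\tt Id} and {\tt COK} with a propositional equivalence.
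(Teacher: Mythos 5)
Your proposal is correct and follows essentially the same route as the paper: the forward direction via the tautology $\phi\lequiv(\neg\phi\limp\bot)$, {\tt N}, {\tt S5(K)}, {\tt CON}, {\tt Id} and {\tt COK}; the backward direction via {\tt D*} together with the theorem $\oblig(\top\mid\neg\phi)$ (from {\tt N} and {\tt CON}), {\tt DfP}, contraposition and duality. The only (immaterial) difference is that the paper instantiates {\tt D*} at $\perm(\top\mid\neg\phi)$ and negates that, whereas you instantiate it at $\perm(\bot\mid\neg\phi)$ and refute it; the two are propositionally interchangeable.
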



%

The following is a syntactic counterpart to Lemma~\ref{lem:equiv}.
\begin{lemma}\label{lem:Synequiv}Let $\Obdot$ stand for an arbitrary but fixed $\Ob{\varphi'}{\psi'}$ formula. 
\begin{align}
\label{item:equiv:five} &\vdash \oblig(\varphi | (\pi \lor (\lambda \land \Obdot))) \lequiv ((\Obdot \land \oblig(\varphi|\pi \lor \lambda)) \lor (\neg\Obdot \land \oblig(\varphi|\pi))\\
\label{item:equiv:six} &\vdash \oblig(\varphi | (\pi \lor (\lambda \land \neg \Obdot))) \lequiv ((\neg \Obdot \land \oblig(\varphi|\pi \lor \lambda)) \lor (\Obdot \land \oblig(\varphi|\pi))\\
\label{item:equiv:seven} &\vdash \oblig(\pi \lor (\lambda \land \Obdot)|\psi) \lequiv ((\Obdot \land \oblig(\pi \lor \lambda|\psi)) \lor (\neg \Obdot \land \oblig(\pi|\psi)))\\
\label{item:equiv:eight} &\vdash \oblig(\pi \lor (\lambda \land \neg \Obdot)|\psi) \lequiv ((\neg \Obdot \land \oblig(\pi \lor \lambda|\psi)) \lor ( \Obdot \land \oblig(\pi|\psi)))
\end{align}
\end{lemma}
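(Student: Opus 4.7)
The plan is to mimic the structure of the semantic argument for Lemma~\ref{lem:equiv}, replacing each of its semantic steps by a direct derivation in system G. The whole argument rests on the fact that, since $\Obdot$ has the form $\oblig(\varphi'|\psi')$, it is ``modally stable'': axiom \texttt{Abs} gives $\vdash \Obdot \limp \Everywhere \Obdot$, and the $\mathsf{T}$-fragment of the $\mathsf{S5}$ axioms for $\Everywhere$ gives $\vdash \Everywhere \Obdot \limp \Obdot$, yielding $\vdash \Obdot \lequiv \Everywhere \Obdot$. The negative analogue $\vdash \neg\Obdot \lequiv \Everywhere \neg\Obdot$ I would get by chaining: from $\Obdot \lequiv \Everywhere \Obdot$ I infer $\neg\Obdot \lequiv \Somewhere\neg\Obdot$; the Euclidean (``$5$'') part of $\mathsf{S5}$ gives $\Somewhere\neg\Obdot \limp \Everywhere\Somewhere\neg\Obdot$; and one more application of $\mathsf{S5}$-necessitation under $\Everywhere$ collapses $\Everywhere\Somewhere\neg\Obdot$ to $\Everywhere\neg\Obdot$.

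Next I would establish the two congruence tools that push propositional equivalences under the two argument positions of $\oblig$. For the condition position, \texttt{Ext} already gives $\vdash \Everywhere(\psi_1 \lequiv \psi_2) \limp (\oblig(\varphi|\psi_1) \lequiv \oblig(\varphi|\psi_2))$. For the first-argument position, I would derive the analogue $\vdash \Everywhere(\alpha \lequiv \beta) \limp (\oblig(\alpha|\psi) \lequiv \oblig(\beta|\psi))$ in two steps: \texttt{CON} turns $\Everywhere(\alpha \limp \beta)$ into $\oblig(\alpha \limp \beta|\psi)$, and \texttt{COK} then transports the implication into the first argument, yielding $\oblig(\alpha|\psi) \limp \oblig(\beta|\psi)$; applying this both ways gives the claimed congruence for a biconditional.

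With these two ingredients I would handle each of (\ref{item:equiv:five})--(\ref{item:equiv:eight}) by the same pattern. For (\ref{item:equiv:five}), I argue conditionally on $\Obdot$: if $\Obdot$ holds, then by stability $\Everywhere\Obdot$ holds, so by propositional reasoning and \texttt{N} one obtains $\vdash \Obdot \limp \Everywhere\bigl((\pi \lor (\lambda \land \Obdot)) \lequiv (\pi \lor \lambda)\bigr)$, and \texttt{Ext} collapses $\oblig(\varphi|\pi \lor (\lambda \land \Obdot))$ to $\oblig(\varphi|\pi \lor \lambda)$; symmetrically under $\neg\Obdot$ the inner condition simplifies to $\pi$, giving $\oblig(\varphi|\pi)$. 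Combining these two conditional theorems with the propositional tautology $\Obdot \lor \neg\Obdot$ produces exactly the disjunction on the right-hand side of (\ref{item:equiv:five}). Equivalence (\ref{item:equiv:six}) is handled identically with the roles of $\Obdot$ and $\neg\Obdot$ swapped. For (\ref{item:equiv:seven}) and (\ref{item:equiv:eight}) the argument is verbatim the same except that the derived first-argument congruence rule takes the place of \texttt{Ext}, since the nested $\Obdot$ now appears inside the consequent of the outer $\oblig$ rather than inside its condition.

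The only real obstacle will be the negative stability fact $\vdash \neg\Obdot \lequiv \Everywhere \neg\Obdot$: \texttt{Abs} alone does not suffice, and one has to genuinely use the Euclidean axiom of the $\mathsf{S5}$-fragment, together with a careful application of \texttt{N} under $\Everywhere$ to push the already-established equivalence $\Somewhere \neg\Obdot \lequiv \neg\Obdot$ under the outer $\Everywhere$. Once this step is in place, the remainder is essentially bookkeeping: applications of \texttt{MP}, \texttt{N}, \texttt{COK}, \texttt{CON}, and \texttt{Ext}, driven by propositional reasoning about the condition $(\pi \lor (\lambda \land \Obdot))$ under the two hypotheses $\Obdot$ and $\neg\Obdot$.
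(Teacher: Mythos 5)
Your proposal is correct and follows essentially the same route as the paper's own proof: a case split on $\Obdot \lor \neg\Obdot$, the stability fact $\vdash \Obdot \lequiv \Everywhere\Obdot$ from \texttt{Abs} and \texttt{T}, a generalised \texttt{Ext} for simplifying the condition position, and a \texttt{CON}+\texttt{COK} congruence for the consequent position (the paper packages these as \texttt{gExt} and \texttt{gCOK} with $\top$/$\bot$ instances, which is only a cosmetic difference). Your treatment of the negative stability fact $\vdash \neg\Obdot \limp \Everywhere\neg\Obdot$ via the Euclidean axiom is in fact more careful than the paper's, which asserts it follows from \texttt{Abs}, \texttt{T} and \texttt{BI} alone.
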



\begin{theorem}\label{thm:Synnonestings}
Every formula $\chi$ is provably equivalent to a formula in UDNF. 
\end{theorem}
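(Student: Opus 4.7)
The plan is to prove the theorem by strong induction on the modal depth $md(\chi)$, mirroring the structure of the semantic argument for Theorem~\ref{thm:nonestings}. The base case $md(\chi) = 0$ is immediate: any purely propositional formula is provably equivalent to its disjunctive normal form, which qualifies as UDNF (every canonical conjunction simply has $n = k = 0$).

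For the inductive step with $md(\chi) = n > 0$, I would argue by inner structural induction on $\chi$. The Boolean cases ($\neg\phi$ and $\phi_1 \lor \phi_2$) follow from the inner IH together with propositional rewriting (De Morgan and distribution) to restore UDNF shape. For $\chi = \Everywhere \phi$, Proposition~\ref{prop:SynDef} supplies $\vdash \Everywhere \phi \lequiv \oblig(\bot | \neg \phi)$, reducing the problem to the $\oblig$ case.

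For $\chi = \oblig(\phi_1 | \phi_2)$, first apply the outer IH on modal depth to each of $\phi_1$ and $\phi_2$, obtaining UDNF formulae $\phi_1', \phi_2'$ with $\vdash \phi_j \lequiv \phi_j'$. To lift these equivalences through the outer $\oblig$, substitutivity in the right argument is immediate from {\tt Ext} (via {\tt N} applied to the biconditional), while substitutivity in the left argument is derivable from {\tt N}, {\tt CON} and {\tt COK}: from $\vdash \phi_1 \lequiv \phi_1'$, {\tt N} yields $\vdash \Everywhere(\phi_1 \lequiv \phi_1')$, then {\tt CON} yields $\vdash \oblig(\phi_1 \lequiv \phi_1' | \phi_2)$, and two applications of {\tt COK} give $\vdash \oblig(\phi_1 | \phi_2) \lequiv \oblig(\phi_1' | \phi_2)$. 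So we may assume both arguments of the outer $\oblig$ are themselves in UDNF. I would then repeatedly invoke Lemma~\ref{lem:prenex} to isolate a prenex subformula $\Obdot$ inside one of the two arguments (shaping that argument as $\pi \lor (\lambda \land \Obdot)$ or $\pi \lor (\lambda \land \neg\Obdot)$) and apply the appropriate clause of Lemma~\ref{lem:Synequiv} to pull $\Obdot$ out of the scope of the outer $\oblig$. A final round of Boolean rewriting puts the whole formula into UDNF.

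The main obstacle is verifying termination of the iterated Lemma~\ref{lem:Synequiv} pullouts together with the bookkeeping around substitutivity. Termination follows by observing that each pullout strictly decreases the total number of prenex subformulae appearing within the scope of the outer $\oblig$: the chosen $\Obdot$ is relocated outside, while the remaining prenex occurrences (collected within $\pi$ and $\lambda$) appear in the two new formulae $\oblig(\varphi | \pi \lor \lambda)$ and $\oblig(\varphi | \pi)$ with $\Obdot$ removed; we then iterate on whichever of these two still contains a prenex formula inside. Substitutivity in the left argument, as sketched above, is the one non-trivial derivation step; everything else is routine propositional bookkeeping, exactly parallel to the semantic proof of Theorem~\ref{thm:nonestings}.
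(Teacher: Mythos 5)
Your proposal is correct and follows essentially the same route as the paper: structural induction on $\chi$ (the paper's proof of Theorem~\ref{thm:Synnonestings} simply re-runs the argument for Theorem~\ref{thm:nonestings} with Proposition~\ref{prop:SynDef} and Lemma~\ref{lem:Synequiv} in place of their semantic counterparts). Your explicit justification of substitutivity inside the arguments of $\oblig$ (via {\tt N}, {\tt CON}, {\tt COK} on the left and {\tt N}, {\tt Ext} on the right) and of termination of the iterated prenex pullouts fills in two steps the paper leaves implicit, and both are carried out correctly.
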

\begin{proof}
The proof is by induction on $\chi$ using the same argument as the proof for Theorem~\ref{thm:nonestings} except that Proposition~\ref{prop:Def} and Lemma~\ref{lem:equiv} are replaced by their syntactic counterparts Proposition~\ref{prop:SynDef} and Lemma~\ref{lem:Synequiv} respectively.
\end{proof}


\section{Conclusion}\label{sec:discussion}

When he introduced restrictions on nesting and in \cite{Hansson1969},\citeauthor{Hansson1969} commented that in SDL, despite these restrictions, ``almost all philosophical problems discussed in connection with deontic logic are expressible in his language.'' He, however, offers no justification for this conjecture and, in moving from SDL to the dyadic language, a formal definition of the dyadic language used in not given, and it is not explicit whether these restrictions are to be kept. It is generally considered that it was his intention that the dyadic language be so restricted, and this is stated explicitly, for example, in \cite{Spohn1975}. We note also that the list of example valid and invalid formulae given in \cite{Hansson1969} contain no nested formulae.

From a technical point of view, it seems unnatural to forbid certain operators to appear in the scope of others. The main goal of our work has been to see whether such a restriction is necessary, and whether it actually limits the ability of the system to express deontic concepts. We have demonstrated that the answer to both questions is negative: forbidding nested modalities is not technically needed, and, on the other hand, not allowing them does not restrict the expressive power of the language either. This  in some sense settles Hansson's conjecture, at least with respect to system G: if any interesting problems can be expressed in the full dyadic language, they also can in a restricted version where nesting is not permitted.




\newpage
\appendix
    \begin{center}
      {\bf APPENDIX}
    \end{center}
\section{Proof of Proposition \ref{prop:Def}}\label{app:proofDef}
\begin{proof}

\textsc{From left to right}:

Assume that for an arbitrary $strong\ h_3$ model $\Model$ and world $s$, $\Model,s \models \Everywhere \phi$. 

Therefore, by the truth definition of $\Everywhere$ \eqref{eq:EverywhereDef}, $\forall (t \in S) \Model,t \models \phi$. 

Therefore, there are no $\neg \phi$-worlds, so $\true{\neg \phi} = \emptyset$. 

Therefore, by the definition of opt \eqref{eq:optdef} $opt(\true{\neg \phi}) = \emptyset \subseteq \true{\bot}$. 

Therefore , by the truth definition of $\oblig$ \eqref{eq:ObligDef}, $\Model, s \models \oblig(\bot|\neg \phi)$.

\textsc{From right to left}:

Assume $\Model, s \models \oblig(\bot|\neg \phi)$. 

Therefore, by the truth definition of $\oblig$ \eqref{eq:ObligDef}, $opt(\true{\neg \phi}) \subseteq \true{\bot}$. 

Therefore, as $\true{\bot} = \emptyset$,  $opt(\true{\neg \phi}) = \emptyset$. 

Therefore, given the limit assumption \eqref{eq:limass}, $\true{\neg \phi} = \emptyset$. 

Therefore $\forall (t \in S) \Model,t \models \phi$. 

Therefore, by the truth definition of $\Everywhere$ \eqref{eq:EverywhereDef}, $\Model, s \models \Everywhere \phi$.




\end{proof}

\section{Proof of Lemma \ref{lem:equiv}}\label{app:proofequiv}
\begin{proof} 


\emph{To prove \eqref{item:equiv:one}:\\} 
(Note that \eqref{item:equiv:two} can be proved using a similar argument, replacing $\Obdot$ with $\neg \Obdot$.)\\

\textsc{From left to right:}

\begin{itemize}                                          
\item[] Let $\Model = \langle S,\worsethaneq,\Val \rangle$ be a model, and let $s \in S$.

\item[] Assume $\Model,s \models \oblig(\varphi | (\pi \lor (\lambda \land \Obdot)))$

\item[] Therefore, by \eqref{eq:ObligDef}, $opt(\true{\pi \lor (\lambda \land \Obdot)}) \subseteq \true{\varphi}$. Call this (A). 

\item[]  Given Observation~\ref{obs:globalmodalprops}, there are two cases: Either $\forall t \in S, \Model, t \models \Obdot$  or $\forall t \in S, \Model, t \models \neg \Obdot$.

\item[] If $\forall t \in S,\  \Model, t \models \Obdot$
     \begin{itemize}
					\item[] Then $\true{\pi \lor (\lambda \land \Obdot)}=\true{\pi} \cup (\true{\lambda} \cap S)=\true{\pi} \cup \true{\lambda} = \true{\pi \lor \lambda}$
					\item[] Therefore, given (A), $opt(\true{\pi \lor \lambda}) \subseteq \true{\varphi}$
					\item[] Therefore, by equation~\eqref{eq:ObligDef}, $\Model, s \models \oblig(\varphi | \pi \lor \lambda)$
					\item[] Therefore, given that $\Model, s \models \Obdot$, $\Model, s \models (\Obdot \land \oblig(\varphi | \pi \lor \lambda)) $
					\item[] Therefore, $\Model, s \models (\Obdot \land \oblig(\varphi|\pi \lor \lambda)) \lor (\neg\Obdot \land \oblig(\varphi|\pi))$
		  \end{itemize}
  
\item[] If $\forall t \in S,\ \Model,t \models \neg \Obdot$:	
			\begin{itemize}
					\item[] Then $\true{\pi \lor (\lambda \land \Obdot)}=\true{\pi} \cup (\true{\lambda} \cap \emptyset)=\true{\pi}$
					\item[] Therefore, given (A), $opt(\true{\pi}) \subseteq \true{\varphi}$
					\item[] Therefore, by equation~\eqref{eq:ObligDef}, $\Model, s \models \oblig(\varphi|\pi)$
					\item[] Therefore, given that $\Model, s \models \neg \Obdot$, $\Model,s \models (\neg\Obdot \land \oblig(\varphi|\pi)$
					\item[] Therefore, $\Model, s \models (\Obdot \land \oblig(\varphi|\pi \lor \lambda)) \lor (\neg\Obdot \land \oblig(\varphi|\pi))$
					
			 \end{itemize}
\end{itemize}

\textsc{From right to left:}

\begin{itemize}
\item[]  Let $\Model = \langle S,\worsethaneq,\Val \rangle$ be a model, and let $s \in S$

item[] Assume $\Model,s \models \oblig(\varphi | \pi \lor \lambda) \land \Obdot ) \lor  ( \oblig(\varphi|\pi) \land \neg \Obdot)$.
\item[] There are two cases. Either $\Model,s \models \oblig(\varphi | \pi \lor \lambda) \land \Obdot $  or $\Model,s \models \oblig(\varphi|\pi) \land \neg \Obdot$.
\item[] If $\Model,s \models \oblig(\varphi | \pi \lor \lambda) \land \Obdot $ holds:
         \begin{itemize}
				     \item[] Then $\Model,s \models \oblig(\varphi | \pi \lor \lambda)$ and $\Model, s, \models \Obdot$
						 \item[] Given $\Model,s \models \oblig(\varphi | \pi \lor \lambda)$, by equation~\eqref{eq:ObligDef} $opt(\true{\pi \lor \lambda}) \subseteq \true{\varphi}$. Call this (B).
						 \item[] By observation~\ref{obs:globalmodalprops}, if $\Model, s, \models \Obdot$ then $\forall t \in S, \Model, t \models \Obdot$, therefore $\true{\Obdot}=S$.
						 \item[] Therefore $\true{\lambda} = \true{\lambda} \cap \true{\Obdot} = \true{\lambda \land \Obdot}$
						 \item[] Therefore $\true{\pi \lor \lambda} = \true{\pi \lor (\lambda \land \Obdot)}$
						 \item[] Therefore, we can substitute $\true{\pi \lor (\lambda \land \Obdot)}$ for $\true{\pi \lor \lambda}$ in (B), getting $opt(\true{\pi \lor (\lambda \land \Obdot)}) \subseteq \true{\varphi}$
						 \item[] Therefore, by \eqref{eq:ObligDef}, $\Model, s \models \oblig(\varphi | (\pi \lor (\lambda \land \Obdot)))$
					\end{itemize}
\item[] If $\Model, s \models \oblig(\varphi|\pi) \land \neg \Obdot$ holds:
					\begin{itemize}
					   \item[]  Then $\Model,s \models \oblig(\varphi | \pi)$ and $\Model, s, \models \neg \Obdot$
						 \item[] Given $\Model,s \models \oblig(\varphi | \pi)$, by equation~\eqref{eq:ObligDef} $opt(\true{\pi}) \subseteq \true{\varphi}$. Call this (C).
						 \item[] By observation~\ref{obs:globalmodalprops}, if $\Model, s, \models \neg \Obdot$ then $\forall t \in S, \Model, t \models \neg \Obdot$, therefore $\true{\Obdot}=\emptyset$.
						 \item[] Therefore $\true{\lambda \land \Obdot} = \true{\lambda} \cap \true{\Obdot} = \emptyset$
						 \item[] Therefore $\true{\pi} = \true{\pi \lor (\lambda \land \Obdot)}$
						 \item[] Therefore, we can subtitute $\true{\pi \lor (\lambda \land \Obdot)}$  for $\true{\pi}$ in (C), getting $opt(\true{\pi \lor (\lambda \land \Obdot)}) \subseteq \true{\varphi}$
						 \item[] Therefore, by \eqref{eq:ObligDef}, $\Model, s \models \oblig(\varphi | (\pi \lor (\lambda \land \Obdot)))$
					\end{itemize}
\end{itemize}

\emph{To prove \eqref{item:equiv:three}:\\} 
(Note that \eqref{item:equiv:four} can be proved using a similar argument, replacing $\Obdot$ with $\neg \Obdot$.)\\

\textsc{From left to right:}

\begin{itemize}
\item[] We assume that for some arbitrary model $\Model = \langle S,\worsethaneq,\Val \rangle$ and arbitrary world $s \in S$, $\Model,s \models \oblig(\ (\pi \lor (\lambda \land \Obdot))|\varphi \ )$

\item[] Therefore, by equation~\eqref{eq:ObligDef}, $opt(\true{\varphi}) \subseteq \true{\pi \lor (\lambda \land \Obdot)}$. Call this (D).


\item[] By Observation~\ref{obs:globalmodalprops}, there are two cases. Either $\forall t \in S,\ \Model,t \models \Obdot$ or $\forall t \in S,\ \Model,t \models \neg \Obdot$.

\item[] If $\forall t \in S,\ \Model,t \models \Obdot$:
			\begin{itemize}[]
					\item[] Then $\true{\Obdot} = S$
					\item[] Therefore, $\true{\pi \lor (\lambda \land \Obdot)}=\true{\pi} \cup (\true{\lambda} \cap S)=\true{\pi} \cup \true{\lambda} = \true{\pi \lor \lambda}$
					\item[] Therefore, given (D), $opt(\true{\varphi}) \subseteq \true{\pi \lor \lambda}$
					\item[] Therefore, by equation~\eqref{eq:ObligDef}, $\Model, s \models \oblig(\ (\pi \lor \lambda)|\varphi \ )$
					\item[] Therefore, given that $\Model, s \models \Obdot$, $\Model, s \models (\Obdot \land \oblig(\varphi|\pi \lor \lambda))$
					\item[] Therefore, $\Model, s \models (\Obdot \land \oblig(\varphi|\pi \lor \lambda)) \lor (\neg\Obdot \land \oblig(\varphi|\pi)$
			\end{itemize}
					
\item[] If $\forall t \in S,\ \Model,t \models \neg \Obdot$:	
			\begin{itemize}
					\item[] Then $\true{\Obdot} = \emptyset$
					\item[] Therefore, $\true{\pi \lor (\lambda \land \Obdot)}=\true{\pi} \cup (\true{\lambda} \cap \emptyset)=\true{\pi}$
					\item[] Therefore, given (D), $opt(\true{\varphi}) \subseteq \true{\pi}$
					\item[] Therefore, by equation~\eqref{eq:ObligDef}, $\Model, s \models \oblig(\pi|\varphi)$
					\item[] Therefore, given that $\Model, s \models \neg \Obdot$, $\Model, s \models (\neg\Obdot \land \oblig(\varphi|\pi)$
					\item[] Therefore, $\Model, s \models (\Obdot \land \oblig(\varphi|\pi \lor \lambda)) \lor (\neg\Obdot \land \oblig(\varphi|\pi)$
			\end{itemize}
\end{itemize}



\textsc{From right to left:}

\begin{itemize}
\item[] We assume that for some arbitrary model $\Model = \langle S,\worsethaneq,\Val \rangle$ and arbitrary world $s \in S$, $\Model,s \models ( \oblig(\pi \lor \lambda | \varphi) \land \Obdot) \lor (\oblig(\pi|\varphi) \land \neg \Obdot)$.

\item[] There are two cases. Either $\Model,s \models  \oblig(\pi \lor \lambda | \varphi) \land \Obdot$ or $\Model,s \models \oblig(\pi|\varphi) \land \neg \Obdot$

\item[] If $\Model,s \models  \oblig(\pi \lor \lambda | \varphi) \land \Obdot$:
		\begin{itemize}
		    \item[] By Observation~\ref{obs:globalmodalprops}, $\forall t \in S, \Model, t \models \oblig(\pi \lor \lambda | \varphi) \mbox{ and } \Model, t \models \Obdot$.
				\item[] Therefore, by equation~\eqref{eq:ObligDef}, $opt(\true{\varphi}) \subseteq \true{\pi \lor \lambda}$.
				\item[] Therefore, given $\true{\Obdot} = S $  , $opt(\true{\varphi}) \subseteq \true{\pi \lor (\lambda \land \Obdot)}$
				\item[] Therefore, by equation~\eqref{eq:ObligDef}, $\Model, s \models \oblig(\ (\pi \lor (\lambda \land \Obdot))|\varphi \ )$
		 \end{itemize}
\item[] If $\Model,s \models  \oblig(\pi|\varphi) \land \neg \Obdot$:
		\begin{itemize}
		    \item[] By Observation~\ref{obs:globalmodalprops}, $\forall t \in S, \Model, t \models \oblig(\pi | \varphi) \mbox{ (and } \Model, t \models \neg \Obdot$ ).
				\item[] Therefore, by equation~\eqref{eq:ObligDef}, $opt(\true{\varphi}) \subseteq \true{\pi}$.
				\item[] Therefore, given that $\true{\pi} \subseteq \true{\pi \lor (\lambda \land \Obdot)}$, $opt(\true{\varphi}) \subseteq \true{\pi \lor (\lambda \land \Obdot)}$
				\item[] Therefore, by equation~\eqref{eq:ObligDef}, $\Model, s \models \oblig(\ (\pi \lor (\lambda \land \Obdot))|\varphi \ )$
		 \end{itemize}
\end{itemize}




\end{proof}

\section{Proof of Theorem~\ref{thm:nonestings}}\label{app:proofnonestings}

\begin{proof}
The theorem can be proved by induction on the syntax of $\chi$ (Definition~\ref{def:systemG}).

There are 5 cases to consider.

\begin{enumerate}

\item For the base case, if $\chi$ is a propositional atom, then by definition it is in UDNF. 


\item If $\chi = \neg \chi_1$ then we show that if $\chi_1$ has an equivalent formula in UDNF, so does $\chi$. Suppose $\chi_1$ = $\delta_1 \lor \dots \lor \delta_z$ for some $z \geq 1$. By Definition~\ref{def:udnf} this is in UDNF.  Then $\neg\chi_1 = \neg\delta_1 \land \dots \land\neg \delta_z$, where each $\neg \delta_i$ is of the form 
\[
\begin{array}{lcl}
\neg\delta_i & = & \neg\alpha_i \lor \neg\oblig(\varphi^i_1|\psi^i_1) \lor \dots \lor  \neg\oblig(\varphi^i_n|\psi^i_n) \\ & & \lor\ \oblig(\varphi^i_{n+1}|\psi^i_{n+1}) \lor \dots \lor \oblig(\varphi^i_{n+k}|\psi^i_{n+k})
\end{array}
\]
The result then follows by applying the distributive law: 
\[
\begin{array}{lcl}
(\gamma^1_1 \lor \dots \lor \gamma^1_{n_1}) \land (\gamma^2_1 \lor \dots \lor \gamma^2_{n_2}) \land \dots \land (\gamma^k_1 \lor \dots \lor \gamma^k_{n_k}) &  \lequiv \\
\bigvee_{1 \leq m_j \leq n_k, j \leq k} (\gamma^1_{m_1} \land \gamma^2_{m_2} \land \dots \land \gamma^k_{m_k})\\ 
\end{array}
\]

\item If $\chi = \chi_1 \lor \chi_2$  then we show that if $\chi_1$ and $\chi_2$ have an equivalent formula in UDNF, so does $\chi$. Suppose $\chi_1 \lequiv \chi'_1$ and $\chi_2 \lequiv \chi'_2$ where $\chi'_1$ and $\chi'_2$ are in UDNF (i.e. disjunctions of canonical conjunctions). Then $\chi=\chi'_1 \lor \chi'_2$ which is also a disjunction of canonical conjunctions and thus in UDNF.


\item If $\chi = \oblig(\chi_1|\chi_2)$ then we show that if $\chi_1$ and $\chi_2$ have an equivalent formula in UDNF, so does $\chi$.  In the case where both $\chi_1$ and $\chi_2$ are propositional formulae, $\chi$ is already in UDNF.  If $\chi_1$ contains a prenex formula, we can, using Lemma~\ref{lem:prenex}, assume that $\chi_1 = \pi \lor (\lambda \land \Obdot)$ or $\chi_1 = \pi \lor (\lambda \land \neg \Obdot)$ for some prenex formula $\Obdot$. In the first case, item~\ref{item:equiv:three}  of Lemma~\ref{lem:equiv} tells us how to remove the prenex outside the scope of the outer ${\mathbf O}$, in the second case we need item~\ref{item:equiv:four} to do this. Likewise, if $\chi_2$ contains a prenex, we can write $\chi_2$ either as $\pi \lor (\lambda \land \Obdot)$, or else as $\pi \lor (\lambda \land \neg\Obdot)$. In the first case, we use item~\ref{item:equiv:two} of Lemma~\ref{lem:equiv} to move the prenex outside the scope of ${\mathbf O}$, and otherwise we use item~\ref{item:equiv:one}. We can repeat this process of removing a prenex from within the scope of ${\mathbf O}$ until none are left, and finally, use the distributive law above to bring the result in normal form.





\item If $\chi = \Everywhere\chi_1$ then we show that if $\chi_1$ has an equivalent formula in UDNF, so does $\chi$. By Proposition~\ref{prop:Def} $\chi_1$ is equivalent to $\oblig(\bot|\neg \chi_1)$. An equivalent formula can be obtained by applying cases 1 to 4 above to this formula.

\end{enumerate}

\end{proof}

\section{Proof of Proposition~\ref{prop:SynDef}}\label{app:proofSynDef}
\begin{proof}

First we note the following rules of inference that are known to be sound in propositional logic namely the rules of hypothetical syllogism ({\tt HS}), biconditional introduction ({\tt BI}), subconditional elimination ({\tt SCE}), contraposition({\tt Contra}) and  substitution of provable equivalences ({\tt Subst}). 
\begin{align}
	&\mbox{If }\vdash \phi \limp \psi \mbox{ and } \vdash \psi \limp \chi \mbox{ then } \vdash \phi \limp \chi \tag{\tt HS}\\
	&\mbox{If }\vdash \phi \limp \psi \mbox{ and } \vdash \psi \limp \phi \mbox{ then } \vdash \phi \lequiv \psi \tag{\tt BI}\\
	&\mbox{If } \vdash \phi \limp (\psi \limp \chi) \mbox{ and } \vdash \psi \mbox{ then } \vdash \psi \limp \chi \tag{\tt SCE}\\
	&\mbox{If }\vdash \phi \limp \psi \mbox{ then } \vdash \neg \psi \limp \neg \phi \tag{\tt Contra}\\
	&\mbox{If } \vdash \alpha \lequiv \beta \mbox{ and } \vdash \phi \mbox{ then } \vdash \phi[\beta/\alpha] \tag{\tt Subst}
\end{align}

In {\tt Subst}, the notation $\phi[\beta/\alpha]$ means the formula exactly like $\phi$ except that instances of the formula $\alpha$ within $\phi$ are replaced by the formula $\beta$. {\tt SCE} is a special case of conditional elimination, where the conditional being eliminated is part of another conditional.
\newpage
Here then is a derivation for the formula~\eqref{eqn:SynEverywheretoO} in Proposition~\ref{prop:SynDef}. 

\[
\begin{array}{llr}
1	&	\top	&	{\tt PL} \\
2 & \Everywhere \top & {\tt 1,N} \\
3 & \Everywhere \top \limp \oblig(\top|\neg \phi) & {\tt CON}\\
4 & \oblig(\top|\neg \phi) & {\tt 2,3, MP}\\
5 & \Somewhere \neg \phi \limp ( \oblig(\top|\neg \phi) \limp \perm(\top| \neg \phi) ) & {\tt D^*}\\
6 & \Somewhere \neg \phi \limp  \perm(\top| \neg \phi)  & {\tt 5,4, SCE}\\
7 & \neg \perm(\top| \neg \phi) \limp \neg \Somewhere \neg \phi & {\tt 6,Contra}\\
8 & \oblig(\bot| \neg \phi) \limp \Everywhere \phi & {\tt 7,DfP,S5(Dual),Subst}\\
\\

9 & \phi \limp (\neg \phi \limp \bot) & {\tt PL}\\
10 & \Everywhere ( \phi \limp (\neg \phi \limp \bot)  & {\tt 9,N}\\
11 & \Everywhere \phi \limp \Everywhere (\neg \phi \limp \bot) & {\tt 10,S5(K),MP}\\
12 & \Everywhere (\neg \phi \limp \bot) \limp \oblig ( (\neg \phi \limp \bot) | \neg \phi  ) & {\tt CON}\\
13 & \oblig ( (\neg \phi \limp \bot) | \neg \phi  ) \limp ( \oblig(\neg \phi| \neg \phi) \limp \oblig(\bot|\neg \phi) ) & {\tt COK}\\
14 & \oblig(\neg \phi| \neg \phi) & {\tt Id}\\
15 & \oblig ( (\neg \phi \limp \bot) | \neg \phi  ) \limp  \oblig(\bot|\neg \phi) & {\tt 13,14, SCE}\\
16 & \Everywhere (\neg \phi \limp \bot) \limp \oblig(\bot|\neg \phi) & {\tt 12,15,HS}\\
17 & \Everywhere \phi \limp \oblig(\bot|\neg \phi) & {\tt 11,16,HS}\\
\\
18 &  \Everywhere  \phi  \lequiv \oblig(\bot|\neg \phi) & {\tt 8,17,BI}
\end{array}
\]
\end{proof}

\section{Proof of Lemma~\ref{lem:Synequiv}}\label{app:proofSynequiv}
\begin{proof}

This proof uses the propositional rules of inference {\tt HS}, {\tt BI} and {\tt Subst} given in Appendix~\ref{app:proofSynDef}.

We first derive the formula {\tt gExt} (a generalisation of ${\tt Ext}$) and ${\tt gExt^+}$ and ${\tt gExt^-}$(two special cases of ${\tt gExt}$):
\[
\begin{array}{ll}
{\tt gExt} :   &\Everywhere(\alpha \lequiv \beta) \limp (\oblig(\varphi|\pi\lor(\lambda \land \alpha)) \lequiv \oblig(\varphi|\pi\lor(\lambda \land \beta)))\\
{\tt gExt}^+ :    &\Everywhere\Obdot \limp (\oblig(\varphi|\pi\lor(\lambda \land\Obdot)) \lequiv \oblig(\varphi|\pi\lor\lambda)) \\
{\tt gExt}^- :    &\Everywhere\neg \Obdot \limp (\oblig(\varphi|\pi\lor(\lambda \land\Obdot)) \lequiv \oblig(\varphi|\pi)) 
\end{array}
\]

The derivation is as follows:
\[
\begin{array}{llr}
1 & (\alpha \lequiv \beta) \limp ((\pi\lor(\lambda \land \alpha)) \lequiv (\pi\lor(\lambda \land \beta))) & {\tt PL} \\
2 & \Everywhere  ( (\alpha \lequiv \beta) \limp ((\pi\lor(\lambda \land \alpha)) \lequiv (\pi\lor(\lambda \land \beta))) )&{\tt 1, N} \\
3 & \Everywhere   (\alpha \lequiv \beta) \limp \Everywhere ((\pi\lor(\lambda \land \alpha)) \lequiv (\pi\lor(\lambda \land \beta))) & {\tt 2, S5(K), MP}\\
4 & \Everywhere ( (\pi\lor(\lambda \land \alpha)) \lequiv (\pi\lor(\lambda \land \beta)) ) \limp \\
  &  \hspace{2cm}( \oblig(\varphi|(\pi\lor(\lambda \land \alpha))) \lequiv \oblig(\varphi|(\pi\lor(\lambda \land \beta))) ) & {\tt Ext} \\
5 & \Everywhere(\alpha \lequiv \beta) \limp (\oblig(\varphi|\pi\lor(\lambda \land \alpha)) \lequiv \oblig(\varphi|\pi\lor(\lambda \land \beta))) & {\tt 3,4,HS}\\
6 & \Obdot \limp (\Obdot \lequiv \top) & {\tt PL} \\
7 & \Everywhere (\Obdot \limp (\Obdot \lequiv \top) )& {\tt 6,N} \\
8 & \Everywhere\Obdot \limp \Everywhere(\Obdot \lequiv \top)  & {\tt 7,S5(K),MP}\\
9 & \Everywhere(\Obdot \lequiv \top) \limp (\oblig(\varphi|\pi\lor(\lambda \land\Obdot)) \lequiv \oblig(\varphi|\pi\lor(\lambda \land \top))) & {\tt gExt}\\
10 & \lambda \lequiv (\lambda \land \top) & {\tt PL}\\
11 & \Everywhere(\Obdot \lequiv \top) \limp (\oblig(\varphi|\pi\lor(\lambda \land\Obdot)) \lequiv \oblig(\varphi|\pi\lor \lambda)) & {\tt 9,10, Subst} \\
12 & \Everywhere\Obdot \limp (\oblig(\varphi|\pi\lor(\lambda \land\Obdot)) \lequiv \oblig(\varphi|\pi\lor\lambda)) & {\tt 7,11, HS}\\
\end{array}
\]
\[
\begin{array}{llr}
13 & \neg \Obdot \limp (\Obdot \lequiv \bot) & {\tt PL} \\
14 & \Everywhere (\neg \Obdot \limp (\Obdot \lequiv \bot) )& {\tt 13,N} \\
15 & \Everywhere \neg \Obdot \limp \Everywhere(\Obdot \lequiv \bot)  & {\tt 14,S5(K),MP}\\
16 & \Everywhere(\Obdot \lequiv \bot) \limp (\oblig(\varphi|\pi\lor(\lambda \land\Obdot)) \lequiv \oblig(\varphi|\pi\lor(\lambda \land \bot))) & {\tt gExt}\\
17 & \pi\lor(\lambda \land \bot) \lequiv \pi & {\tt PL}\\
18 & \Everywhere(\Obdot \lequiv \bot) \limp (\oblig(\varphi|\pi\lor(\lambda \land\Obdot)) \lequiv \oblig(\varphi|\pi)) & {\tt 16,17, Subst} \\
19 &  \Everywhere\neg \Obdot \limp (\oblig(\varphi|\pi\lor(\lambda \land\Obdot)) \lequiv \oblig(\varphi|\pi)) & {\tt 15,17, HS}\\
\end{array}
\]

%
%

Going back to the main lemma, to derive \eqref{item:equiv:five} we show that the following sequence of formulae are provably equivalent. (Note that \eqref{item:equiv:six} can be derived in a similar way, by replacing $\Obdot$ with $\neg \Obdot$.):

\[
\begin{array}{ll}
1&\Ob{\varphi}{(\pi \lor (\lambda \land \Obdot))}\\                                   
2& (\Obdot \land \Ob{\varphi}{(\pi \lor (\lambda \land \Obdot))}) \lor 
(\neg\Obdot \land \Ob{\varphi}{(\pi \lor (\lambda \land \Obdot))})\\                  
3& (\Everywhere\Obdot \land \Ob{\varphi}{(\pi \lor (\lambda \land \Obdot))}) \lor 
(\Everywhere\neg\Obdot \land \Ob{\varphi}{(\pi \lor (\lambda \land \Obdot))}) \\      
4& (\Everywhere\Obdot \land \Ob{\varphi}{(\pi \lor \lambda)}) \lor 
(\Everywhere\neg\Obdot \land \Ob{\varphi}{(\pi \lor (\lambda \land \Obdot))}) \\      
5& (\Everywhere\Obdot \land \Ob{\varphi}{(\pi \lor \lambda)}) \lor 
(\Everywhere\neg\Obdot \land \Ob{\varphi}{\pi}) \\                                    
6& (\Obdot \land \Ob{\varphi}{(\pi \lor \lambda)}) \lor 
(\neg\Obdot \land \Ob{\varphi}{\pi})\\
\end{array}
\]

${\tt 2} \lequiv {\tt 1}$  by substituting $\Ob{\varphi}{(\pi \lor (\lambda \land \Obdot))}$ for $\alpha$ and $\Obdot$ for $\beta$ in the propositional tautology $\alpha \lequiv (\beta \land \alpha) \lor (\neg \beta \land \alpha)$

${\tt 3} \lequiv {\tt 2}$, given that $\vdash \Obdot \limp \Everywhere\Obdot$ ({\tt Abs}) and $\vdash \Everywhere\Obdot \limp \Obdot$({\tt T}) by applying {\tt BI} to obtain $\vdash \Obdot \lequiv \Everywhere\Obdot$, and then {\tt Subst}.

${\tt 4} \lequiv {\tt 3}$, given the propositional tautology $(\alpha \limp (\beta \land \gamma)) \lequiv ((\alpha \land \beta) \lequiv (\alpha \land \gamma))$ and ${\tt Ext}^+$, by successive applications of {\tt Subst}. 

${\tt 5} \lequiv {\tt 4}$, in a similar way, given $(\alpha \limp (\beta \land \gamma)) \lequiv ((\alpha \land \beta) \lequiv (\alpha \land \gamma))$  and ${\tt Ext}^-$, by successive applications of {\tt Subst}.

${\tt 6} \lequiv {\tt 5}$, by given that $\vdash \Obdot \lequiv \Everywhere\Obdot$ and $\vdash \neg \Obdot \lequiv \Everywhere\neg\Obdot$ from {\tt Abs}, {\tt T} and {\tt BI}, by {\tt Subst}.

Next we also derive the formula {\tt gCOK} (a generalisation of ${\tt COK}$) and ${\tt gCOK^+}$ and ${\tt gCOK^-}$(two special cases of ${\tt gCOK}$):
\[
\begin{array}{ll}
{\tt gCOK} :    &\Everywhere   (\alpha \lequiv \beta) \limp (\oblig(\pi\lor(\lambda \land \alpha)|\psi) \limp \oblig(\pi\lor(\lambda \land \beta)|\psi))\\
{\tt gCOK}^+ :  &\Everywhere\Obdot \limp (\oblig(\pi\lor(\lambda \land \Obdot)|\psi) \limp \oblig(\pi\lor\lambda|\psi)) \\
{\tt gCOK}^- :  &\Everywhere\neg \Obdot \limp   (\oblig(\pi\lor(\lambda \land \Obdot)|\psi) \limp \oblig(\pi|\psi))
\end{array}
\]

The derivation is as follows:
\[
\begin{array}{llr}
1 & (\alpha \lequiv \beta) \limp ( \pi\lor(\lambda \land \alpha) \lequiv \pi\lor(\lambda \land \beta) ) & {\tt PL} \\
2 & \Everywhere  ( (\alpha \lequiv \beta) \limp ( \pi\lor(\lambda \land \alpha) \lequiv \pi\lor(\lambda \land \beta) ) )&{\tt 1, N} \\
3 & \Everywhere   (\alpha \lequiv \beta) \limp \Everywhere (\pi\lor(\lambda \land \alpha) \lequiv \pi\lor(\lambda \land \beta)) & {\tt 2, S5(K), MP}\\
4 & \Everywhere (\pi\lor(\lambda \land \alpha) \lequiv \pi\lor(\lambda \land \beta)) \limp & \\
   & \hspace{2cm} \oblig ( \pi\lor(\lambda \land \alpha) \lequiv \pi\lor(\lambda \land \beta) | \psi ) & {\tt CON}\\
5 & \Everywhere   (\alpha \lequiv \beta) \limp \oblig (\pi\lor(\lambda \land \alpha) \lequiv \pi\lor(\lambda \land \beta)|\psi) & {\tt 3,4.HS}\\
6 & \oblig(\pi\lor(\lambda \land \alpha) \lequiv \pi\lor(\lambda \land \beta)|\psi) \limp &\\
   & \hspace{2cm} (\oblig(\pi\lor(\lambda \land \alpha)|\psi) \limp \oblig(\pi\lor(\lambda \land \beta)|\psi)) & {\tt COK}\\
7 & \Everywhere   (\alpha \lequiv \beta) \limp (\oblig(\pi\lor(\lambda \land \alpha)|\psi) \limp \oblig(\pi\lor(\lambda \land \beta)|\psi)) & {\tt 5,6,HS}
\\
8 & \Obdot \limp (\Obdot \lequiv \top) & {\tt PL} \\
9 & \Everywhere (\Obdot \limp (\Obdot \lequiv \top) )& {\tt 8,N} \\
10 & \Everywhere\Obdot \limp \Everywhere(\Obdot \lequiv \top)  & {\tt 9,S5(K),MP}\\
11 & \Everywhere(\Obdot \lequiv \top) \limp  (\oblig(\pi\lor(\lambda \land \Obdot)|\psi) \limp \oblig(\pi\lor(\lambda \land \top)|\psi)) & {\tt gCOK}\\
12 & \lambda \lequiv (\lambda \land \top) & {\tt PL}\\
13 & \Everywhere(\Obdot \lequiv \top) \limp  (\oblig(\pi\lor(\lambda \land \Obdot)|\psi) \limp \oblig(\pi\lor\lambda|\psi)) & {\tt 11,12, Subst} \\
14 &  \Everywhere\Obdot \limp (\oblig(\pi\lor(\lambda \land \Obdot)|\psi) \limp \oblig(\pi\lor\lambda|\psi)) & {\tt 10,12, HS}\\
\end{array}
\]
\[
\begin{array}{llr}
15 & \neg \Obdot \limp (\Obdot \lequiv \bot) & {\tt PL} \\
16 & \Everywhere (\neg \Obdot \limp (\Obdot \lequiv \bot) )& {\tt 15,N} \\
17 & \Everywhere \neg \Obdot \limp \Everywhere(\Obdot \lequiv \bot)  & {\tt 16,S5(K),MP}\\
18 & \Everywhere(\Obdot \lequiv \bot) \limp  (\oblig(\pi\lor(\lambda \land \Obdot)|\psi) \limp \oblig(\pi\lor(\lambda \land \bot)|\psi)) & {\tt gCOK}\\
19 & \pi\lor(\lambda \land \bot) \lequiv \pi & {\tt PL}\\
20 & \Everywhere(\Obdot \lequiv \bot) \limp  (\oblig(\pi\lor(\lambda \land \Obdot)|\psi) \limp \oblig(\pi|\psi)) & {\tt 18,19, Subst} \\
21 &  \Everywhere\neg \Obdot \limp   (\oblig(\pi\lor(\lambda \land \Obdot)|\psi) \limp \oblig(\pi|\psi))& {\tt 15,18, HS}\\
\end{array}
\]

    
Going back to the main lemma again, to derive \eqref{item:equiv:seven} we show that the following sequence of formulae are provably equivalent. (Note that \eqref{item:equiv:eight} can be derived in a similar way, by replacing $\Obdot$ with $\neg \Obdot$.):

\[
\begin{array}{ll}
1&\Ob{\pi \lor (\lambda \land \Obdot)}{\psi}\\								                   
2& (\Obdot \land \Ob{\pi \lor (\lambda \land \Obdot)}{\psi}) \lor 
(\neg\Obdot \land \Ob{\pi \lor (\lambda \land \Obdot)}{\psi})\\                  
3& (\Everywhere\Obdot \land \Ob{\pi \lor (\lambda \land \Obdot)}{\varphi}) \lor 
(\Everywhere\neg\Obdot \land \Ob{\pi \lor (\lambda \land \Obdot)}{\varphi})\\    
4&  (\Everywhere\Obdot \land \Ob{\pi \lor \lambda}{\psi}) \lor 
(\Everywhere\neg\Obdot \land \Ob{\pi \lor (\lambda \land \Obdot)}{\psi})\\       
5&(\Everywhere\Obdot \land \Ob{\pi \lor \lambda}{\psi}) \lor 
(\Everywhere\neg\Obdot \land \Ob{\pi}{\psi})\\		                               
6&(\Obdot \land \Ob{\pi \lor \lambda}{\psi}) \lor 
(\neg\Obdot \land \Ob{\pi}{\psi})
\end{array}
\]
   
${\tt 2} \lequiv {\tt 1}$  by substituting $\Ob{\varphi}{(\pi \lor (\lambda \land \Obdot))}$ for $\alpha$ and $\Obdot$ for $\beta$ in the propositional tautology $\alpha \lequiv (\beta \land \alpha) \lor (\neg \beta \land \alpha)$

${\tt 3} \lequiv {\tt 2}$, given that $\vdash \Obdot \limp \Everywhere\Obdot$ ({\tt Abs}) and $\vdash \Everywhere\Obdot \limp \Obdot$({\tt T}) by applying {\tt BI} to obtain $\vdash \Obdot \lequiv \Everywhere\Obdot$, and then {\tt Subst}.

${\tt 4} \lequiv {\tt 3}$, given the propositional tautology $(\alpha \limp (\beta \land \gamma)) \lequiv ((\alpha \land \beta) \lequiv (\alpha \land \gamma))$ and ${\tt COK}^+$, by successive applications of {\tt Subst}. 

${\tt 5} \lequiv {\tt 4}$, in a similar way, given $(\alpha \limp (\beta \land \gamma)) \lequiv ((\alpha \land \beta) \lequiv (\alpha \land \gamma))$  and ${\tt COK}^-$, by successive applications of {\tt Subst}.

${\tt 6} \lequiv {\tt 5}$, by given that $\vdash \Obdot \lequiv \Everywhere\Obdot$ and $\vdash \neg \Obdot \lequiv \Everywhere\neg\Obdot$ from {\tt Abs}, {\tt T} and {\tt BI}, by {\tt Subst}.

\end{proof}

\end{document}